\documentclass{article}
\usepackage{arxiv}
\usepackage[utf8]{inputenc}
\usepackage[T1]{fontenc}
\usepackage{amsmath,amssymb,amsthm}
\usepackage{booktabs,graphicx,microtype,tikz}
\usepackage[numbers]{natbib}
\usepackage{url,doi,hyperref,authblk}
\hypersetup{hidelinks}
\newtheorem{proposition}{Proposition}
\newtheorem{corollary}{Corollary}
\newtheorem{definition}{Definition}
\theoremstyle{remark}

\newcommand{\R}{\mathbb R}
\newcommand{\E}{\mathbb E}
\newcommand{\A}{\mathcal A}
\newcommand{\Ptwo}{\mathcal P_2(\R^d)}
\title{A Thermodynamic Theory of Learning Part II:\\
History-Dependent Reachability and Continual Learning}
\author{Daisuke Okanohara}
\affil{Preferred Networks, Inc.}
\date{Revised September 2026}
\renewcommand{\shorttitle}{History-Dependent Reachability}
\begin{document}
\maketitle

\begin{abstract}
We formulate
plasticity as target-dependent, finite-horizon reachability under
history-dependent dynamics, using standard minimum-energy control theory.
An extended state includes parameters and internal
variables that affect future updates. Around a reference trajectory, the
input-to-endpoint response and its controllability Gramian determine the
minimum input energy for each reachable displacement. Minimizing this energy
over the joint task target defines a task-conditioned adaptation cost, exact
for the arbitrary-input linear model. A solvable learning--retention example
shows that this cost can increase without any rank loss, and distinguishes
small directional gain from an inaccessible direction. Wasserstein transport
supplies a complementary global lower bound, while learning-map Jacobians
describe deformation of initial perturbations rather than response to future
inputs. The resulting learning--retention frontier depends on the current
extended state, admissible updates, target, horizon, and cost metric;
nonlinear and constrained-update applications require additional control of
these modeling choices.
\end{abstract}

\section{Introduction}
Continual learning asks a system to acquire new tasks while retaining earlier
performance. Its plasticity is often discussed in terms of how much capacity
remains. A scalar dimension or volume, however, cannot decide whether the
updates available to a learner point toward a particular task target, or
whether their gain is large enough to reach it within a finite budget.
We take target-dependent reachability under history-dependent dynamics as the
organizing viewpoint: the question is which joint task targets can be attained,
by which future updates, and at what cost.

Four questions must be separated. Static compatibility asks whether a common
solution exists. Geometric distance asks how far that solution set lies from
the current state. Deformation asks how earlier learning transformed initial
perturbations. Future accessibility asks how new inputs can change the endpoint.
In particular, initial-state sensitivity and future-input sensitivity are
different derivatives of the learning process:
\begin{equation}
 J_{\tau,0}=\frac{\partial\theta_\tau}{\partial\theta_0},
 \qquad K_{\tau,t}=\frac{\delta\theta_\tau}{\delta u_t}.
\end{equation}
The first derivative holds the future input protocol fixed; the second varies
that protocol from a fixed initial state. Neither can in general replace the other.

A minimal example shows why dimension is insufficient. Suppose the only
available update is $\dot x=u$, $\dot z=0$ from the origin. The targets
$(1,0)$ and $(0,1)$ have the same Euclidean distance and can be specified by
quadratic losses with identical Hessians, but only the first is reachable.
Even when both directions are available, $\dot x=a u_x$, $\dot z=u_z$ with
$a>0$ requires energy $1/(\tau a^2)$ to reach $(1,0)$ and $1/\tau$ to reach
$(0,1)$. The rank is two for every positive $a$, while the first cost diverges
as $a\downarrow0$. Task direction and directional gain are indispensable.

Learning history can change these gains through optimizer state, replay data,
or other internal variables even at identical parameter values. We therefore
start from an extended state and use its input-to-endpoint Gramian to define
a task-conditioned adaptation cost. The underlying minimum-energy identity
is standard linear control theory~\cite{BoydControl}; our use of it makes the
joint learning--retention target and the dependence on history explicit.
The contribution is a task-target formulation and a set of exact separation
results for continual learning. We minimize the classical input-energy cost
over a joint acquisition--retention target, derive its complete budget frontier
in an anisotropic two-dimensional model, and construct examples separating
compatibility, initial-state sensitivity, and future-input accessibility.
The minimum-energy identity, transport inequality, and matrix monotonicity
argument are established tools or direct consequences of them. We do not
claim a new controllability theorem, a universal mechanism of plasticity loss,
or an empirically validated predictor for neural-network training.

Part~I~\cite{okanohara2026} supplies a transport-cost and finite-time geometric
perspective. Here transport bounds serve as global necessary conditions, and
Jacobian volumes answer the separate question of initial-perturbation
deformation. A free-energy interpretation is restricted to the gradient-flow
model in Appendix~\ref{sec:freeenergy}. The main argument does not depend on
that interpretation. Appendix~\ref{app:limits} retains the counterexamples and
explicit withdrawals of earlier scalar-capacity claims. This version replaces
the capacity-threshold and unconditional monotonicity claims in versions~1--2
of this article~\cite{okanoharaPartIIv2}; the scope of the revision is substantive.

\subsection{Relation to prior work}
Empirical studies already distinguish the ability to acquire new tasks from
retention of previous ones. Dohare et al.~\cite{Dohare2024} demonstrate
plasticity loss in deep continual learning and study continual backpropagation
as an intervention. Lyle et al.~\cite{Lyle2023} connect plasticity loss to
loss-landscape curvature and investigate parameterization and optimization
choices. Our finite-dimensional examples do not reproduce those empirical
findings or establish their mechanism. They isolate a narrower mathematical
point: gain and target alignment affect finite-budget accessibility even when
rank and current parameters are unchanged.

Learning as optimal control is also an existing perspective. Vastola et
al.~\cite{Vastola2025} derive learning rules from trajectory optimization,
including effects of parameter-space geometry, partial controllability, and
adaptive state. Here we specify a future dynamics and admissible input class,
then ask whether its endpoints intersect a joint task target and what input
budget is required. This is an endpoint feasibility calculation; it does not
derive an optimal learning rule under uncertainty. Section~\ref{sec:diagnostics}
further relates this calculation to task Jacobians, NTK rank, and activation
subspaces. These comparisons locate the formulation within existing control
and plasticity research rather than treating its ingredients as new.

\section{Compatibility, Distance, Deformation, and Reachability}
\label{sec:concepts}
Let $\theta_A\in\R^d$ be the current parameters and let $\Phi_A,\Phi_B$ be
nonnegative task losses with $\Phi_A(\theta_A)<\infty$. For a retention tolerance $\varepsilon_A\ge0$ and
new-task threshold $b_B\ge0$, define the parameter target sets
\begin{equation}
 S_A=\{\theta:\Phi_A(\theta)\le\Phi_A(\theta_A)+\varepsilon_A\},\quad
 T_B=\{\theta:\Phi_B(\theta)\le b_B\},\quad C=S_A\cap T_B.
 \label{eq:parameter-targets}
\end{equation}
These conditions concern the endpoint. Requiring the entire learning path to
remain in $S_A$ is an additional constraint. Forgetting and failure to acquire
the new task are distinct outcomes: an inability to reach $C$ does not imply
forgetting unless new-task success is also required.

\begin{table}[t]
\centering\small
\begin{tabular}{@{}p{0.19\linewidth}p{0.28\linewidth}p{0.45\linewidth}@{}}
\toprule
Question & Mathematical object & What it establishes \\
\midrule
Static compatibility & Joint target $C=S_A\cap T_B$ & Whether any parameters meet both requirements. \\[4pt]
Geometric distance & Distance to $C$; distributional $W_2$ distance & A necessary transport budget, independent of the update law. \\[4pt]
Initial deformation & $J_{\tau,0}$ and $V_Q(J_{\tau,0})$ & How prescribed initial perturbations change under a fixed learning protocol. \\[4pt]
Future accessibility & $K_{\tau,t}$, $W_\tau$, reachable set & Which target directions future inputs can attain and at what input cost. \\
\bottomrule
\end{tabular}
\caption{Four distinct questions in continual learning. All response quantities
require a specified dynamics and horizon; distance also requires a metric.}
\label{tab:concepts}
\end{table}

For distributions, Section~\ref{sec:transport} uses expected-loss sets
$\mathcal S_A,\mathcal T_B,\mathcal C\subset\Ptwo$.
We reserve $C$ for parameter targets and $\mathcal C$ for distribution targets.
Dirac measures connect the two descriptions, but expected retention for an
ensemble does not imply retention by each member. In either setting, static
compatibility alone supplies no update rule that reaches the target.

\section{History-Dependent Reachability}
\label{sec:inputs}
\subsection{Extended state and future-input response}
Let the current state be $s_A=(\theta_A,h_A)$, where $h_A$ contains the internal
variables needed to specify future dynamics. Optimizer moments, normalization
statistics, or a fixed-dimensional representation of retained data may be
entered $h_A$. Consider a smooth finite-dimensional model
\begin{equation}
 \dot s=f_t(s,u),\qquad s(0)=s_A.
 \label{eq:extended}
\end{equation}
Discrete buffer replacements or architecture changes need a discrete or hybrid
model; a smooth representation is an assumption, not automatic.
Fix a reference input $\bar u$ and trajectory $\bar s(t)$, with parameter
endpoint $\bar\theta_\tau=P_\theta\bar s(\tau)$. Linearization gives
\begin{equation}
 \dot\xi=\mathsf A_t\xi+\mathsf G_t\delta u_t,\qquad \xi(0)=0,
 \quad \mathsf A_t=D_s f_t(\bar s(t),\bar u_t),\quad
 \mathsf G_t=D_u f_t(\bar s(t),\bar u_t).
\end{equation}
Let $U(t,r)$ be its state-transition matrix, and assume the resulting kernel
is square-integrable on $[0,\tau]$. The bounded endpoint operator and Gramian are
\begin{align}
 K_{\tau,t}&=P_\theta U(\tau,t)\mathsf G_t, &
 \mathcal L_\tau\delta u&=\int_0^\tau K_{\tau,t}\delta u_t\,dt,\nonumber\\
 W_\tau&=\mathcal L_\tau\mathcal L_\tau^*
       =\int_0^\tau K_{\tau,t}K_{\tau,t}^\top dt.
 \label{eq:gramian}
\end{align}
Here $y=\mathcal L_\tau\delta u$ is a displacement from
$\bar\theta_\tau$, not in general from $\theta_A$. Two histories with the same
$\theta_A$ can give different reference paths, kernels, and Gramians through
$h_A$. The reference input and the admissible input coordinates are part of
this specification.

\subsection{Minimum energy and the joint target}
\begin{proposition}[Minimum input energy]
\label{prop:energy}
In the linearized model with arbitrary signed $L^2$ inputs and energy
$\mathcal E[\delta u]=\int_0^\tau\|\delta u_t\|^2dt$, an endpoint displacement
$y$ is reachable exactly when $y\in\operatorname{Range}(W_\tau)$. Its minimum
energy is
\begin{equation}
 E_{\min}(y)=y^\top W_\tau^\dagger y,
 \qquad \delta u_t^*=K_{\tau,t}^\top W_\tau^\dagger y.
 \label{eq:minenergy}
\end{equation}
\end{proposition}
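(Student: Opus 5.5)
The argument is the standard Hilbert-space minimum-norm argument applied to the bounded operator $\mathcal L_\tau:L^2([0,\tau];\R^m)\to\R^d$. Write $H=L^2([0,\tau];\R^m)$. The adjoint acts by $(\mathcal L_\tau^*y)_t=K_{\tau,t}^\top y$, so $W_\tau=\mathcal L_\tau\mathcal L_\tau^*$ as in \eqref{eq:gramian}. Square-integrability of the kernel makes $\mathcal L_\tau$ bounded, which justifies all of this.

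\textbf{Reachable set.} I will show $\operatorname{Range}(\mathcal L_\tau)=\operatorname{Range}(W_\tau)$. The inclusion $\supseteq$ is immediate, since $W_\tau z=\mathcal L_\tau(\mathcal L_\tau^*z)$. For $\subseteq$, I pass to orthogonal complements in $\R^d$. If $W_\tau z=0$, then
\[
 0=z^\top W_\tau z=\|\mathcal L_\tau^*z\|_H^2,
\]
so $\ker W_\tau=\ker\mathcal L_\tau^*$. Because we are in finite dimensions on the target side, $\operatorname{Range}(\mathcal L_\tau)$ is closed and equals $(\ker\mathcal L_\tau^*)^\perp=(\ker W_\tau)^\perp=\operatorname{Range}(W_\tau)$, where the last step uses that $W_\tau$ is symmetric. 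Hence $y$ is reachable iff $y\in\operatorname{Range}(W_\tau)$.

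\textbf{Minimum energy.} Fix $y\in\operatorname{Range}(W_\tau)$ and set $\delta u^*=\mathcal L_\tau^*W_\tau^\dagger y$. This is feasible, since $\mathcal L_\tau\delta u^*=W_\tau W_\tau^\dagger y=y$, because $W_\tau W_\tau^\dagger$ is the orthogonal projector onto $\operatorname{Range}(W_\tau)$.

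Now take any feasible $\delta u$ and write $\delta u=\delta u^*+v$ with $v\in\ker\mathcal L_\tau$. Since $\delta u^*\in\operatorname{Range}(\mathcal L_\tau^*)=(\ker\mathcal L_\tau)^\perp$, the two pieces are orthogonal, so
\[
 \mathcal E[\delta u]=\|\delta u^*\|^2+\|v\|^2\ge\|\delta u^*\|^2,
\]
with equality iff $v=0$. This also gives uniqueness of the minimizer.

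Finally, using $W_\tau^\dagger W_\tau W_\tau^\dagger=W_\tau^\dagger$ and the symmetry of $W_\tau^\dagger$,
\[
 \|\delta u^*\|^2=y^\top W_\tau^\dagger\mathcal L_\tau\mathcal L_\tau^*W_\tau^\dagger y=y^\top W_\tau^\dagger W_\tau W_\tau^\dagger y=y^\top W_\tau^\dagger y.
\]

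\textbf{Main obstacle.} The only delicate point is the range identification in infinite-dimensional input space. It needs $\mathcal L_\tau$ bounded, so that the adjoint exists and has the stated kernel form. It also needs $\operatorname{Range}\mathcal L_\tau$ closed, which is automatic here because it is a subspace of $\R^d$. The rest is pseudoinverse algebra.
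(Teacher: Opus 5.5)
Your proof is correct and follows the same route as the paper. You identify $\operatorname{Range}(\mathcal L_\tau)$ with $\operatorname{Range}(W_\tau)$, verify that $\delta u^*=\mathcal L_\tau^*W_\tau^\dagger y$ is feasible, and conclude via the orthogonal decomposition $\delta u=\delta u^*+v$, $v\in\ker\mathcal L_\tau$, and Pythagoras; the one difference is that you spell out the range identification through $\ker W_\tau=\ker\mathcal L_\tau^*$, which the paper compresses into a one-line appeal to the finite-dimensional codomain.
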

\begin{proof}
Because $W_\tau=\mathcal L_\tau\mathcal L_\tau^*$ has finite-dimensional
codomain, its range equals that of $\mathcal L_\tau$.
For $y$ in this range, $\mathcal L_\tau\delta u^*=W_\tau W_\tau^\dagger y=y$.
Every other solution is $\delta u^*+v$ with $v\in\ker\mathcal L_\tau$,
orthogonal to $\delta u^*\in\operatorname{Range}(\mathcal L_\tau^*)$.
Pythagoras gives minimality and
$\|\delta u^*\|_{L^2}^2=y^\top W_\tau^\dagger y$.
\end{proof}

\begin{definition}[Task-conditioned adaptation cost]
For the specified reference trajectory and input metric, define
\begin{equation}
 \mathcal C_\tau(C;s_A,\bar u)=
 \inf_{\substack{y\in\operatorname{Range}(W_\tau)\\
                  \bar\theta_\tau+y\in C}}
 y^\top W_\tau^\dagger y,
 \qquad \inf\varnothing=\infty.
 \label{eq:adaptation-cost}
\end{equation}
\end{definition}
When the reference parameters remain at $\theta_A$, the endpoint condition
reduces to $\theta_A+y\in C$. For a moving reference, the cost measures
additional input energy relative to that reference, not its total learning
effort. If the infimum is attained (in particular, for a closed target as
below), zero cost means that the reference endpoint already satisfies the
task requirements. Without attainment, zero cost can instead describe targets
arbitrarily close to that endpoint.

The linear reachable set under energy budget $E\ge0$ is
\begin{equation}
 \mathcal R_\tau^{\rm lin}(E)=\bar\theta_\tau+
 \{y\in\operatorname{Range}(W_\tau):y^\top W_\tau^\dagger y\le E\}.
 \label{eq:reachable}
\end{equation}
Thus $C\cap\mathcal R_\tau^{\rm lin}(E)\ne\varnothing$ is the exact budget
criterion. A cost greater than $E$ excludes attainment; a cost strictly below
$E$ implies it. At equality the infimum must be attained. If $C$ is closed,
as it is for continuous losses in \eqref{eq:parameter-targets}, every finite
infimum is attained: the feasible displacement set is closed and the
quadratic cost has compact sublevel sets on $\operatorname{Range}(W_\tau)$.
Under this condition, $\mathcal C_\tau(C;s_A,\bar u)\le E$ is equivalent to
attainment in the linear model. Without closedness, the open target $(1,\infty)$
with $W_\tau=1$ and reference endpoint zero has cost $1$ but is unreachable
at budget $1$.

For positive eigenvalues $\lambda_i$ and orthonormal eigenvectors $e_i$,
\begin{equation}
 E_{\min}(y)=\sum_{\lambda_i>0}\frac{(e_i^\top y)^2}{\lambda_i}
 \quad\text{for }y\in\operatorname{Range}(W_\tau).
\end{equation}
Outside the range the energy is infinite; the pseudoinverse formula alone
must not be used there. This expression exposes the role of task direction
and gain that a rank count omits. For an input cost
$\int\delta u_t^\top R_t\delta u_t\,dt$ with uniformly positive-definite,
bounded $R_t$, the corresponding Gramian is
$\int K_{\tau,t}R_t^{-1}K_{\tau,t}^\top dt$.
Costs are comparable only after the input metric has been fixed.

\subsection{Admissible updates and nonlinear scope}
The preceding result is exact for an affine system, or for the linearized
model considered on its own. For nonlinear dynamics an endpoint expansion
$\theta_\tau=\bar\theta_\tau+\mathcal L_\tau\delta u+r(\delta u)$ requires
bounds on the remainder and on the neighborhood in which it holds.
A target reached only on its boundary need not remain feasible under a small
endpoint error. Nonlinear drift or higher-order responses may also reach
directions absent from the linear range. Thus the cost is a local model
quantity, not an unconditional nonlinear accessibility theorem.

For fixed data-update vectors $v_i$ and nonnegative mixture rates
$p_i(t)$ satisfying $\sum_i p_i(t)\le1$, the endpoint displacement set is
$\tau\operatorname{conv}\{0,v_1,\ldots,v_m\}$. Integration gives membership,
and constant mixture rates attain every member. Unbounded nonnegative rates
instead give the generated cone; arbitrary signed rates give the linear
span before imposing an energy bound. Even full input rank cannot remove
these differences. State-dependent gradients require a corresponding
state-dependent input constraint.

A fixed loss imposes stronger restrictions still. For
$\Phi_B(\theta)=(c^\top\theta-1)^2/2$ and a fixed positive-definite
preconditioner $M$, gradient flow from zero has
\begin{equation}
 \theta(t)=\frac{Mc}{\alpha}(1-e^{-\alpha t}),\qquad
 \alpha=c^\top Mc>0.
 \label{eq:fixedgradient}
\end{equation}
This follows by solving the scalar residual equation
$\dot r=-\alpha r$, $r(0)=1$. The trajectory stays on
$\operatorname{span}(Mc)$ even when $M$ has full rank. For
$M=\operatorname{diag}(0.2,1)$ and $c=(1,1)$ it approaches $(1/6,5/6)$,
although the common solution $(1,0)$ for $\Phi_A=z^2/2$ is reachable by the
arbitrary-input system $\dot\theta=Mu$. Existence under arbitrary control is
not a guarantee for the fixed gradient trajectory. Section~\ref{app:input}
quantifies the analogous distinction between input cost and transport action.

\section{An Exact Learning--Retention Frontier}
\label{app:input}
The adaptation cost can be computed exactly when the future update gain is
fixed. This example compares the joint target with an input-cost ellipse;
$a$ may represent a gain set by history, without assuming a particular
mechanism that produced it. From $(x,z)=(0,0)$ let
\begin{equation}
 \dot x=a u_x,\quad \dot z=u_z,\quad 0<a\le1,\qquad
 \mathcal E[u]=\int_0^\tau(u_x^2+u_z^2)dt\le E,
\end{equation}
with arbitrary signed square-integrable inputs. The gain $a$ is fixed during
this phase and is not a past learning-map Jacobian. Let
$\Phi_A=\mu z^2/2$, $\mu>0$, and $\Phi_B=(x+z-1)^2/2$.
Success means $|x+z-1|\le r$, where $0\le r<1$; write $s=1-r$ and
$b=\tau E$. The exact endpoint set is
\begin{equation}
 x^2/a^2+z^2\le b.
 \label{eq:ellipse}
\end{equation}
Here $W_\tau=\tau\operatorname{diag}(a^2,1)$ and the reference is stationary.
Cauchy--Schwarz proves necessity, and constant inputs to each endpoint prove
sufficiency. The nearest target boundary is $x+z=s$: scaling any point farther
into the target strip toward zero to this boundary reduces both quadratic
cost and $|z|$.

On that boundary the cost is
$f_s(z)=(s-z)^2/a^2+z^2$, minimized at $z=s/(1+a^2)$. Thus
\begin{equation}
 b_{\mathrm{success}}=\frac{s^2}{1+a^2},\qquad
 b_{\mathrm{retain}}=\frac{s^2}{a^2}.
\end{equation}
Below the first value success is impossible. Between the two values success
is feasible but requires positive old-task loss. At or above the second,
success and perfect retention can coexist. In the intermediate interval,
solving $f_s(z)\le b$ gives
\begin{equation}
 z_{\min}=\frac{s-a\sqrt{(1+a^2)b-s^2}}{1+a^2},\qquad
 \min\Phi_A=\frac\mu2z_{\min}^2.
\end{equation}
The perfect-retention target therefore has
$\mathcal C_\tau(C)=s^2/(\tau a^2)$. Decreasing $a>0$ increases its adaptation
cost while preserving full Gramian rank. At $a=0$ the perfect-retention target
leaves the reachable subspace. Figure~\ref{fig:frontier} illustrates this
intersection geometry.

Both budget boundaries include equality. For tolerance $\Phi_A\le\varepsilon_A$,
put $\rho=\sqrt{2\varepsilon_A/\mu}$ and
$\bar z=\min\{s/(1+a^2),\rho\}$. The minimum joint budget is
$f_s(\bar z)$. Constant-input paths attain it; along those paths $\Phi_A$
increases monotonically to its endpoint value, so the same tolerance holds
throughout. This pathwise statement is specific to this example.

For the same velocity path, input effort and Euclidean action are
\begin{equation}
 \mathcal E=\int_0^\tau(\dot x^2/a^2+\dot z^2)dt,\qquad
 \A=\int_0^\tau(\dot x^2+\dot z^2)dt.
\end{equation}
If only $\A\le D$ is constrained, the success and perfect-retention thresholds
are instead $\tau D=s^2/2$ and $\tau D=s^2$, independently of $a>0$.
The gain-dependent threshold therefore arises from the input-cost metric.
At exactly $a=0$, $x$ cannot move and positive-target success with perfect
retention is impossible; the formula dividing by $a$ does not apply.

\begin{figure}[t]
\centering
\begin{tikzpicture}[scale=3.0,>=stealth]
 \fill[blue!8] (-.68,-.2) rectangle (1.24,.2);
 \fill[orange!10] (-.5,1.5)--(1.24,1.5)--(1.24,-.24)--cycle;
 \draw[blue!55,dashed] (-.68,.2)--(1.24,.2);
 \draw[blue!55,dashed] (-.68,-.2)--(1.24,-.2);
 \draw[->] (-.72,0)--(1.32,0) node[right] {$x$};
 \draw[->] (0,-1.25)--(0,1.55) node[above] {$z$};
 \draw[very thick,teal!80!black] (0,0) ellipse (.559 and 1.118);
 \draw[thick,orange!80!black] (-.5,1.5)--(1.24,-.24);
 \fill (.2,.8) circle (.022) node[right,xshift=3pt] {\small minimum-cost success};
 \fill (1,0) circle (.022) node[below,yshift=-15pt] {\small perfect retention};
 \draw[densely dotted] (.2,.8)--(.2,0);
 \node[anchor=west] at (.4,1.32) {\small $x+z\ge s$};
 \node[anchor=east] at (-.12,-.36) {\small $|z|\le\rho$};
 \node[anchor=west] at (.18,-.87) {\small reachable ellipse};
 \node[below left] at (0,0) {\small $0$};
 \node[above] at (1,0) {\small $(s,0)$};
\end{tikzpicture}
\caption{Reachability and task targets for $a=1/2$, $s=1$, $b=5/4$,
and retention tolerance $\rho=1/5$. The ellipse is the exact input-budget
endpoint set. The orange half-plane shows the lower success requirement
(the full task target is a strip); the blue band shows retention.
The minimum-cost success point is $(1/5,4/5)$, reached already at
$b_{\rm success}=4/5$. At the illustrated budget the ellipse meets success
but not joint retention. The perfect-retention point $(1,0)$ requires
$b_{\rm retain}=4$.}
\label{fig:frontier}
\end{figure}

\section{A Global Transport Lower Bound}
\label{sec:transport}
The input-cost frontier depends on the update law. A complementary obstruction
comes from the distance to the target in an unrestricted transport geometry.
This section uses a separate transport budget $D$, rather than the input
energy budget $E$ of Section~\ref{sec:inputs}.

Fix the Euclidean metric on $\R^d$ and a horizon $\tau>0$.
Let $q:[0,\tau]\to\Ptwo$ be a $2$-absolutely continuous Wasserstein curve,
and let $v_t$ be a Borel velocity field satisfying, in the distributional sense,
\begin{equation}
 \partial_t q_t+\nabla\cdot(q_tv_t)=0,\qquad
 \A[q,v]=\int_0^\tau\int\|v_t\|^2\,dq_t\,dt<\infty.
 \label{eq:action}
\end{equation}
Here $2$-absolute continuity means that an $L^2$ function bounds the metric
speed. Measures need not have densities; in particular, point trajectories
are represented by Dirac measures. No factor $1/2$ is included in $\A$.
The action belongs to the specified pair $(q,v)$; velocities representing the
same distribution path need not be unique.

We use the Benamou--Brenier dynamic formulation~\cite{Benamou2000,Villani2009},
\begin{equation}
 W_2(\mu,\nu)^2=\inf_{\rho,w}\int_0^1\int\|w_s\|^2\,d\rho_s\,ds,
 \label{eq:bb}
\end{equation}
where the infimum ranges over continuity-equation transports with endpoints
$\mu,\nu$. The following is the geometric content of the ESL.

\begin{proposition}[Finite-time transport bound]
\label{prop:esl}
Every transport in \eqref{eq:action} satisfies
\begin{equation}
 \A[q,v]\ge \frac{W_2(q_0,q_\tau)^2}{\tau}.
 \label{eq:esl}
\end{equation}
\end{proposition}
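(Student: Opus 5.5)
The plan is to reduce \eqref{eq:esl} to the Benamou--Brenier formula \eqref{eq:bb} by rescaling time from $[0,\tau]$ to $[0,1]$. Given an admissible pair $(q,v)$ on $[0,\tau]$, I set $\rho_s=q_{\tau s}$ and $w_s=\tau v_{\tau s}$ for $s\in[0,1]$. The first step is to check that $(\rho,w)$ satisfies the continuity equation in the distributional sense with endpoints $\rho_0=q_0$ and $\rho_1=q_\tau$. This is a change of variables in the weak formulation. For a test function $\varphi\in C_c^\infty((0,1)\times\R^d)$, put $\psi(t,x)=\varphi(t/\tau,x)$. Then $\partial_t\psi=\tau^{-1}\partial_s\varphi$, and substituting $t=\tau s$ in the weak identity for $(q,v)$ yields the weak identity for $(\rho,w)$. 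The factor $\tau$ in $w$ exactly compensates the chain-rule factor. The endpoint values are unchanged because $q$ is continuous in $W_2$, which follows from $2$-absolute continuity.

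The second step computes the action:
\begin{equation*}
\int_0^1\!\!\int\|w_s\|^2\,d\rho_s\,ds=\int_0^1\!\!\int\tau^2\|v_{\tau s}\|^2\,dq_{\tau s}\,ds
=\tau\int_0^\tau\!\!\int\|v_t\|^2\,dq_t\,dt=\tau\,\A[q,v].
\end{equation*}
Since $(\rho,w)$ is a competitor in \eqref{eq:bb}, it follows that $W_2(q_0,q_\tau)^2\le\tau\A[q,v]$, which is \eqref{eq:esl}. The bound also covers Dirac trajectories, and it does not use uniqueness of $v$. The inequality holds for the specified velocity, and hence a fortiori for any velocity with larger action.

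The main obstacle is making sure the rescaled pair lies in exactly the class over which the infimum in \eqref{eq:bb} is taken. That class should consist of narrowly continuous curves with Borel velocity and finite action, including measures without densities. I would invoke the version in \cite{Villani2009} (equivalently Ambrosio--Gigli--Savar\'e, Thm.~8.3.1), where $W_2^2$ equals the minimal action over all distributional solutions with $\int_0^1\int\|w\|^2d\rho\,ds<\infty$. Finite action holds because $\A<\infty$. Narrow continuity follows from $W_2$-continuity of $q$. If one prefers to avoid the full Benamou--Brenier theorem, a fallback is the direct estimate from that same theory: finite action gives $|q'|(t)\le\|v_t\|_{L^2(q_t)}$ for a.e.\ $t$. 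Then $W_2(q_0,q_\tau)\le\int_0^\tau|q'|\,dt\le\sqrt{\tau}\,\A^{1/2}$ by Cauchy--Schwarz, which gives the same inequality.
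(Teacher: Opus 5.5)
Your proof is correct and is the paper's argument: the paper uses the same rescaling $\rho_s=q_{\tau s}$, $w_s=\tau v_{\tau s}$, notes that this pair solves the continuity equation with action $\tau\A[q,v]$, and concludes from \eqref{eq:bb}. Your weak-formulation check, your check that the rescaled pair lies in the Benamou--Brenier admissible class, and your fallback via $|q'|(t)\le\|v_t\|_{L^2(q_t)}$ and Cauchy--Schwarz only add details that the paper leaves implicit.
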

\begin{proof}
Set $\rho_s=q_{\tau s}$ and $w_s=\tau v_{\tau s}$ on $[0,1]$.
This pair satisfies the continuity equation and has action $\tau\A[q,v]$.
Equation~\eqref{eq:bb} gives the result.
\end{proof}
The bound is attained by a constant-speed Wasserstein geodesic when all
transports are allowed. Such a geodesic need not be generated by a fixed loss,
a specified optimizer, or a prescribed stochastic process.

\subsection{Endpoint retention and new-task success}
Let $\Phi_A,\Phi_B:\R^d\to[0,\infty]$ be Borel losses and write
$L_i(q)=\int\Phi_i\,dq$. Fix a current distribution $q_A$ with
$L_A(q_A)<\infty$, a retention tolerance $\varepsilon_A\ge0$, and a finite
new-task threshold $b_B\ge0$. Define
\begin{align}
 a_A&=L_A(q_A)+\varepsilon_A, &
 \mathcal S_A&=\{q\in\Ptwo:L_A(q)\le a_A\},\nonumber\\
 \mathcal T_B&=\{q\in\Ptwo:L_B(q)\le b_B\}, &
 \mathcal C&=\mathcal S_A\cap\mathcal T_B.
 \label{eq:targets}
\end{align}
These are expected-loss requirements; they do not imply retention by every
individual ensemble member. A loss sublevel set is also not generally a smooth
task-preserving manifold.

\begin{proposition}[Necessary budget for joint attainment]
\label{prop:joint}
Put $\delta_{\mathcal C}=\inf_{q\in\mathcal C}W_2(q_A,q)$, with
$\inf\varnothing=\infty$. If a transport starts at $q_A$, has
$\A[q,v]\le D<\infty$, and ends in $\mathcal C$, then
\begin{equation}
 \delta_{\mathcal C}^2\le\tau D.
 \label{eq:joint}
\end{equation}
Thus $\delta_{\mathcal C}^2>\tau D$ excludes joint attainment.
\end{proposition}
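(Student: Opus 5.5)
The argument is a direct chaining of Proposition~\ref{prop:esl} with the definition of $\delta_{\mathcal C}$ as an infimum. Take any transport $(q,v)$ in the sense of \eqref{eq:action}, with $q_0=q_A$, $\A[q,v]\le D<\infty$, and terminal distribution $q_\tau\in\mathcal C$.

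First, the existence of such a transport already makes $\mathcal C$ nonempty, since it contains $q_\tau$. So the convention $\inf\varnothing=\infty$ plays no role in the implication. It matters only for the contrapositive reading: $\delta_{\mathcal C}=\infty>\tau D$ correctly excludes attainment when $\mathcal C=\varnothing$.

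Second, because $q_\tau\in\mathcal C$, the definition of the infimum gives $\delta_{\mathcal C}\le W_2(q_A,q_\tau)$. Both sides are finite and nonnegative: $q_A,q_\tau\in\Ptwo$, so $W_2(q_A,q_\tau)<\infty$. Squaring therefore preserves the inequality.

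Third, apply Proposition~\ref{prop:esl} to this transport. Since $q_0=q_A$, it gives $W_2(q_A,q_\tau)^2\le\tau\A[q,v]$. Combining with the budget yields
\begin{equation*}
 \delta_{\mathcal C}^2\le W_2(q_A,q_\tau)^2\le\tau\A[q,v]\le\tau D .
\end{equation*}
The final sentence of the statement is the contrapositive of this chain.

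There is no substantive obstacle. The only point needing care is that Proposition~\ref{prop:esl} applies to the given pair $(q,v)$ with its stated action. The curve is $2$-absolutely continuous and satisfies the continuity equation with finite action, which is exactly the hypothesis of that proposition. I would also note two things the argument does not need: closedness of $\mathcal C$ and attainment of the infimum. The bound uses only that the particular endpoint $q_\tau$ lies in $\mathcal C$.
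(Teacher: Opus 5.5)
Your proof is correct and is the paper's argument: the paper's proof is the single chain $\delta_{\mathcal C}^2\le W_2(q_A,q_\tau)^2\le\tau\A[q,v]\le\tau D$, obtained from the definition of the infimum and the finite-time transport bound (Proposition~\ref{prop:esl}). Your additional remarks are accurate and agree with the paper's own discussion after the proposition: $\mathcal C$ is nonempty, $W_2(q_A,q_\tau)$ is finite before squaring, and neither closedness of $\mathcal C$ nor attainment of the infimum is needed.
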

\begin{proof}
By the definition of the infimum and Proposition~\ref{prop:esl},
$\delta_{\mathcal C}^2\le W_2(q_A,q_\tau)^2\le\tau\A[q,v]\le\tau D$.
\end{proof}
\begin{corollary}[Retention failure conditional on success]
\label{cor:forgetting}
If $\delta_{\mathcal C}^2>\tau D$, every transport within budget that ends in
$\mathcal T_B$ has $L_A(q_\tau)>a_A$.
\end{corollary}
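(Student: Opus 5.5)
The corollary follows from Proposition~\ref{prop:joint} by contraposition, applied to a single transport. Take any admissible transport $(q,v)$ in \eqref{eq:action} with $q_0=q_A$ and $\A[q,v]\le D$, whose endpoint satisfies $q_\tau\in\mathcal T_B$. We must show that $q_\tau\notin\mathcal S_A$. By the definition \eqref{eq:targets}, that is exactly the strict inequality $L_A(q_\tau)>a_A$.

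Suppose instead that $L_A(q_\tau)\le a_A$. The endpoint $q_\tau$ lies in $\Ptwo$ because $q$ is a curve in that space, so $q_\tau\in\mathcal S_A$. Combined with $q_\tau\in\mathcal T_B$, this gives $q_\tau\in\mathcal C$. The transport then starts at $q_A$, has action at most $D<\infty$, and ends in $\mathcal C$, so Proposition~\ref{prop:joint} yields $\delta_{\mathcal C}^2\le\tau D$. This contradicts the hypothesis $\delta_{\mathcal C}^2>\tau D$. Hence $L_A(q_\tau)>a_A$.

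There is no substantial obstacle. One edge case needs checking: if $\mathcal C=\varnothing$, then $\delta_{\mathcal C}=\infty$ and the hypothesis holds automatically. The argument still goes through, since any endpoint in $\mathcal T_B$ must then fail membership in $\mathcal S_A$. The case $L_A(q_\tau)=\infty$ is also consistent with the conclusion. I would add a remark that the statement is conditional on success. If the transport does not end in $\mathcal T_B$, nothing is asserted about retention, which matches the paper's distinction between forgetting and failure to acquire the new task.
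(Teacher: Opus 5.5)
Your proof is correct and is the same argument as the paper's: assuming $L_A(q_\tau)\le a_A$ puts the endpoint in $\mathcal C$, which contradicts Proposition~\ref{prop:joint}. Your extra checks (that $q_\tau\in\Ptwo$, the implicit start at $q_A$, and the case $\mathcal C=\varnothing$) are sound and simply make explicit what the paper's one-line proof leaves implicit.
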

\begin{proof}
Otherwise its endpoint would be in $\mathcal C$, contradicting
Proposition~\ref{prop:joint}.
\end{proof}
The corollary does not assert the existence of a successful transport. Failure
to reach $\mathcal T_B$ is another possible outcome. Nor does it give a positive
uniform lower bound on the amount by which $L_A$ exceeds $a_A$.

If $\mathcal C=\varnothing$, the requirements are statically incompatible.
If $\mathcal C\ne\varnothing$ but \eqref{eq:joint} fails, the obstruction is
relative to the chosen time and action budget. Increasing either can change
the answer. No compactness or attainment assumption is needed for the
necessary condition. Conversely, if a distance-minimizing endpoint in
$\mathcal C$ exists and all transports are allowed, its geodesic gives endpoint
feasibility whenever \eqref{eq:joint} holds. At equality, nonattainment of the
infimum can prevent feasibility. These observations do not give sufficiency
for a specified optimizer.

\subsection{Pathwise retention is a separate constraint}
Requiring $q_t\in\mathcal S_A$ for every $t$ is stronger than endpoint
retention. For example, take $\Phi_A(x)=x^2$, $q_A=\delta_0$, and
$\varepsilon_A=0$. The path $q_t=\delta_{\sin^2(\pi t/\tau)}$ returns to
$\mathcal S_A$ at $\tau$ but violates retention at every interior time.
Proposition~\ref{prop:joint} remains necessary under pathwise retention, but
its geodesic converse need not respect that constraint.

\subsection{Global distance and dynamics-specific feasibility}
The two tests answer different questions:
\begin{equation}
 \delta_{\mathcal C}^2\le\tau D,
 \qquad \mathcal C\cap\mathcal R_\tau^{\rm dyn}(D)\ne\varnothing,
\end{equation}
where $\mathcal R_\tau^{\rm dyn}(D)$ denotes distribution endpoints generated
by the specified dynamics within transport action $D$. The first is necessary
for the second. It measures how far the target is; the second asks whether
the dynamics can move in the required directions. For example, from the
origin the target $(0,1)$ has squared Dirac transport distance $1$, but is
unreachable under $\dot x=u$, $\dot z=0$ for every budget.
When updates are instead constrained by input energy, the relevant set is
$\mathcal R_\tau^{\rm lin}(E)$ or its nonlinear counterpart. A relation between
input energy and transport action must be established before combining their
numerical budgets; Section~\ref{app:input} shows why they need not agree.

\section{Deformation of Initial Perturbations}
\label{sec:volume}
The preceding sections concern new inputs applied from the current state.
We now consider a different experiment: perturb the initial state while
holding the learning protocol fixed. Its Jacobian measures deformation of
prescribed initial perturbations, not remaining capacity or future plasticity.
For an extended-state flow, a parameter-to-parameter derivative must also
specify which initial internal variables are held fixed. The square flow maps
below act on the full state, or on parameters when they form a closed system;
a projected parameter block need not inherit full-state invertibility.

Let $\Psi_{t,s}$ be $C^1$ interval maps with
$\Psi_{t,0}=\Psi_{t,s}\circ\Psi_{s,0}$. At
$x_s=\Psi_{s,0}(x_0)$ their Jacobians satisfy
\begin{equation}
 J_{t,0}(x_0)=J_{t,s}(x_s)J_{s,0}(x_0).
 \label{eq:chain}
\end{equation}
Task switching and time dependence require this two-time notation; a
one-parameter semigroup is not assumed. A probability-flow map, an optimal
transport map, and a stochastic update map conditioned on a noise realization
can have different Jacobians even when their endpoint distributions agree.

\begin{definition}[Normalized volume factor]
For a fixed $Q\in\R^{d\times k}$, $Q^\top Q=I_k$, $1\le k\le d$, define
\begin{equation}
 G_Q(J)=Q^\top J^\top JQ,\qquad
 V_Q(J)=\det G_Q(J)^{1/(2k)},\qquad R_Q(J)=V_Q(J)^2.
 \label{eq:volume}
\end{equation}
\end{definition}
The actual $k$-volume multiplier is $\sqrt{\det G_Q}$; $V_Q$ is its $k$th
root. For $J=I$, $V_Q=R_Q=1$ independently of $k$; for $J=cI$,
$V_Q=|c|$ and $R_Q=c^2$. These are scale factors, not counts of directions.
A zero determinant gives zero volume factor. If $Q$ spans an initial
retention-manifold tangent space, $JQ$ need not lie in a retention tangent
space at the endpoint. Such an interpretation additionally requires the
map to preserve the manifold.

\begin{proposition}[Conditional volume monotonicity]
\label{prop:volume}
For $J,K\in\R^{d\times d}$ and the same $Q$ on both sides, if
$K^\top K\preceq I$, then
\begin{equation}
 G_Q(KJ)\preceq G_Q(J),\qquad V_Q(KJ)\le V_Q(J).
\end{equation}
\end{proposition}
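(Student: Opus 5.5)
The plan is to prove the Loewner inequality first, by congruence, and then pass to the determinant through monotonicity of $\det$ on positive semidefinite matrices.

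\textbf{Step 1 (Loewner inequality).} Write
\[
G_Q(KJ)=Q^\top J^\top K^\top K J Q .
\]
For any $w\in\R^k$, put $x=JQw\in\R^d$. The hypothesis $K^\top K\preceq I$ gives $x^\top K^\top K x\le x^\top x$. Therefore
\[
w^\top G_Q(KJ)w\le w^\top Q^\top J^\top J Q w = w^\top G_Q(J)w .
\]
Equivalently, congruence by $JQ$ preserves the Loewner order, since $I-K^\top K\succeq0$ implies $(JQ)^\top(I-K^\top K)(JQ)\succeq0$. This yields $G_Q(KJ)\preceq G_Q(J)$.

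\textbf{Step 2 (determinant monotonicity).} Both Gram matrices are positive semidefinite. We then need the standard fact that $0\preceq A\preceq B$ implies $\det A\le\det B$.

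If $\det B=0$, then $B$ has a kernel vector $w$. From $0\le w^\top A w\le w^\top B w=0$ we get $w^\top A w=0$, and since $A\succeq0$ this forces $Aw=0$. Hence $\det A=0$ as well.

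If $B\succ0$, write
\[
B^{-1/2}AB^{-1/2}\preceq I .
\]
The eigenvalues of this matrix lie in $[0,1]$, so
\[
\det A=\det B\cdot\det(B^{-1/2}AB^{-1/2})\le\det B .
\]

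\textbf{Step 3 (conclusion).} Apply Step 2 with $A=G_Q(KJ)$ and $B=G_Q(J)$. The map $t\mapsto t^{1/(2k)}$ is monotone on $[0,\infty)$, so $V_Q(KJ)\le V_Q(J)$. This includes the zero-volume case noted after the definition, where a zero determinant gives volume factor zero.

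The only step needing care is the singular case of determinant monotonicity. The kernel argument in Step 2 handles it directly; alternatively, one could perturb $B$ by $\epsilon I$ and let $\epsilon\downarrow0$.

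Note that Step 1 uses the same $Q$ on both sides, as the statement requires. No invertibility of $J$ or $K$ is needed, and no square-root or product structure beyond the composition $KJ$ is used. This is consistent with the chain rule \eqref{eq:chain}: taking $K=J_{t,s}$ applied after $J=J_{s,0}$ shows that a contractive later segment cannot increase the volume factor.
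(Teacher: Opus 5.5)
Your proposal is correct and follows essentially the same route as the paper: congruence by $M=JQ$ gives $G_Q(KJ)\preceq G_Q(J)$, then determinant monotonicity on positive semidefinite matrices and a monotone $2k$th root. The only difference is that you prove $\det A\le\det B$ directly, with a kernel argument in the singular case and $B^{-1/2}AB^{-1/2}\preceq I$ otherwise, whereas the paper cites eigenvalue-wise monotonicity $\lambda_i(A)\le\lambda_i(B)$, which covers the singular case without a separate argument.
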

\begin{proof}
Put $M=JQ$. Then $M^\top K^\top KM\preceq M^\top M$.
Eigenvalue monotonicity for positive semidefinite matrices implies the same
ordering for their determinants. Taking nonnegative $2k$th roots proves the
claim, including singular cases.
\end{proof}
For random maps the condition must hold for each realization, along the
properly composed interval maps. With $X=\log\det G_Q(J)$, $\log0=-\infty$,
and $\E[X^+]<\infty$, the expectation lies in $[-\infty,\infty)$.
The pointwise inequality then also implies monotonicity of
$\exp(\E[X]/k)$, the averaged squared-volume quantity used previously.
The same initial $Q$ must be used in each comparison.

Convex gradient flows supply one sufficient condition: a variation obeys
$\dot\xi=-\nabla^2\Phi\,\xi$, so
$d\|\xi\|^2/dt=-2\xi^\top\nabla^2\Phi\xi\le0$.
Nonconvex flows and discrete updates do not generally satisfy this condition.
Non-expansiveness is sufficient, not necessary, for volume decrease.
For instance, $K=\operatorname{diag}(2,1/4)$ decreases full volume but expands
one direction.

\subsection{Rank, log volume, and finite-time invertibility}
Always $\operatorname{rank}(KJ)\le\min(\operatorname{rank}K,
\operatorname{rank}J)$. This concerns propagation of initial perturbations.
For square invertible matrices the exact identity is
\begin{equation}
 \log\det((KJ)^\top KJ)
 =\log\det(J^\top J)+\log\det(K^\top K).
\end{equation}
There are no additional interaction terms. The corresponding restricted
volume does not in general have this simple additive formula.

For a smooth flow with $\dot J_t=B_tJ_t$, $J_0=I$, and
$\int_0^\tau\|B_t\|\,dt<\infty$, the fundamental matrix is invertible and
$\det J_t=\exp(\int_0^t\operatorname{tr}B_s\,ds)>0$.
Thus finite-time exact rank loss does not occur under these assumptions.
Small singular values, noninvertible discrete updates, and singular limits
must be distinguished.

For $B_t=\nabla v_t(x_t)$ and $M_t=J_tQ$ of full column rank,
\begin{equation}
 \frac{d}{dt}\log\det(M_t^\top M_t)
 =2\operatorname{tr}(P_t\operatorname{Sym}B_t),\quad
 P_t=M_t(M_t^\top M_t)^{-1}M_t^\top.
 \label{eq:strain}
\end{equation}
Indeed, differentiate $M_t^\top M_t$ and use
$d\log\det G/dt=\operatorname{tr}(G^{-1}\dot G)$.
For $k=d$ this becomes $2\nabla\cdot v_t(x_t)$.
Action involves $\|v_t\|^2$ whereas volume change involves its spatial
 derivative; neither expression supplies a general inequality relating them.

\section{History-Dependent Plasticity}
\label{sec:history}
The dependence of $W_\tau$ on $s_A=(\theta_A,h_A)$ makes it possible to ask
how two learning histories change adaptation from the same parameters.
Keep the task target, horizon, reference input protocol, and input metric
fixed. For histories $h_1,h_2$, the inequality
\begin{equation}
 \mathcal C_\tau(C;(\theta_A,h_2),\bar u)
 >\mathcal C_\tau(C;(\theta_A,h_1),\bar u)
\end{equation}
means that the second history makes this target more costly in the specified
response model. If the reference endpoints differ, this comparison includes
both reference drift and response geometry; a comparison intended to isolate
the Gramian should also match the reference endpoint.

Plasticity degradation in this sense need not remove a direction. In
Section~\ref{app:input}, a smaller positive gain increases the cost of the
retention-preserving displacement while the rank stays two. More generally,
the projection of the target displacement onto weak Gramian eigenvectors
can raise cost. If two Gramians are positive definite, share a reference
endpoint, and satisfy $W_\tau^{(2)}\preceq W_\tau^{(1)}$, inverse ordering gives
$y^\top (W_\tau^{(2)})^{-1}y\ge y^\top (W_\tau^{(1)})^{-1}y$ for every $y$ and hence the same
ordering of adaptation costs for a fixed target. Without such an ordering,
one history can help one target and hinder another. A single scalar ranking
of histories then loses task-relevant information.

A simple extended-state realization makes this dependence exact. Set
$\dot x=(1+h)^{-1}u_x$, $\dot z=u_z$, and $\dot h=0$, with $h\ge0$.
The same parameters $(x,z)=(0,0)$ and different stored values of $h$ give
$W_\tau(h)=\tau\operatorname{diag}((1+h)^{-2},1)$.
For the perfect-retention target in Section~\ref{app:input}, the adaptation
cost is $s^2(1+h)^2/\tau$. Under zero reference input, the parameter
initial-condition Jacobian is the identity for every $h$. Thus identical
parameters and identical initial-state deformation can coexist with different
future adaptation costs. This is a model of a stored gain, not a derivation
of a particular optimizer's state dynamics.

Optimizer moments and preconditioners affect the state evolution and input
coupling. Resetting optimizer state can therefore change $\mathsf A_t$,
$\mathsf G_t$, and the reference endpoint at fixed $\theta_A$; it does not
imply a universal improvement. Replay changes which updates are admissible
and may also change the reference objective. Regularization changes drift
and its linearization, potentially trading target progress against retention.
Normalization statistics can alter the same operators. Architecture changes
can change the state space itself, requiring a specified correspondence
between task outputs before costs are compared. Stochastic updates require
a choice between conditioning on a noise realization and probabilistic
reachability; the deterministic Gramian alone supplies no probability of
success. These mechanisms fit the framework through explicit model changes,
not through a common monotonicity claim.

\section{Discussion}
The operational quantity developed here is the minimum future input energy
needed to reach a joint target in a specified response model. Its value is
relative to the target, current extended state, reference protocol, horizon,
and input metric. Static compatibility is a prerequisite; global transport
distance provides another necessary obstruction. Neither specifies the
admissible future learning directions. Initial-perturbation deformation is
a complementary sensitivity experiment, with its own conditional volume
result. This separation prevents any one geometry from being asked to answer
all four questions in Table~\ref{tab:concepts}.

\subsection{Relation to plasticity diagnostics}
\label{sec:diagnostics}
The Jacobian of model outputs with respect to parameters differs from the
Jacobian of a learning map with respect to initial parameters. Under a local
output linearization with endpoint output Jacobian $H$, an input response
$\mathcal L_\tau$ induces output response $H\mathcal L_\tau$ and output Gramian
$HW_\tau H^\top$. This gives one explicit way to connect a feature or output
geometry to a task target, while retaining the future update dynamics.

Jacobian sketches of earlier data can support provable retention in specified
linear and wide-network models~\cite{Heckel2022}. NTK-rank diagnostics in
continual reinforcement learning~\cite{Tang2025} and readout-relevant activation
subspaces~\cite{Anthes2025} supply other task-linked geometries. Feature rank
and activation subspaces describe representation structure; a Hessian spectrum
can enter the drift linearization for a gradient flow; gradient norm describes
an instantaneous update magnitude. None alone specifies the finite-horizon
input response, its alignment with a joint target, and its cost metric.
Their use as plasticity diagnostics therefore needs assumptions connecting
them to the response and target under study.

\subsection{What the formulation does and does not predict}
The exact example and minimum-energy identity establish a target-specific
cost interpretation, not a claim that a Gramian diagnostic predicts progress
for every optimizer. Data-dependent gradients impose input constraints, and
long nonlinear trajectories can invalidate a local approximation.
A useful empirical test should compare histories at matched initial loss,
task difficulty, horizon, and budget, specify the admissible updates, and
measure both acquisition and retention. It should test whether the proposed
response approximation predicts endpoints or adaptation cost on independent
tasks, with uncertainty across training histories. The exploratory negative
result retained in Appendix~\ref{app:limits} makes this validation requirement
concrete.

\section{Conclusion}
Continual-learning plasticity is usefully formulated as target-dependent
reachability under history-dependent dynamics and finite cost. A joint
solution can exist without being accessible, and a full-rank response can
still make a task prohibitively costly. The input-to-endpoint Gramian and its
task-conditioned adaptation cost express these distinctions exactly in an
arbitrary-input linear model. Transport distance gives a global necessary
budget, while learning-map volume describes initial-perturbation deformation.
History-dependent plasticity loss can thus appear as increased adaptation
cost without rank disappearance; extending this account to a learner requires
its admissible updates and approximation errors to be specified.

\section*{Acknowledgments}
We thank Akihito Sudo for careful and constructive comments on an earlier
version of this manuscript. His counterexamples identified problems in the
previous capacity-threshold and monotonicity arguments and enabled us to
correct the formulation substantially. His comments also helped us isolate
the non-expansiveness assumption used in
Proposition~\ref{prop:volume} and clarify the distinction between deformation
of initial perturbations and reachability under future inputs.

\section*{Use of generative AI tools}
OpenAI Codex was used to assist with manuscript restructuring and drafting,
mathematical derivations and proof exposition, verification-code preparation,
and the preparation of the figure. Responsibility
for the scientific claims, citations, and final submitted content rests with
the author.

\bibliographystyle{plain}
\bibliography{references}

\appendix
\section{Free-Energy Interpretation}
\label{sec:freeenergy}
The action in \eqref{eq:action} is a geometric cost. Identifying it with
physical entropy production, optimizer input effort, or computation requires
a separate model and appropriate coefficients. For comparison with the
Wasserstein gradient-flow formulation~\cite{Jordan1998}, assume in this section
that $q_t$ is a smooth positive density, all displayed integrals are finite,
and differentiation and integration by parts are valid with vanishing boundary
terms. Let $T_t,m_t>0$ be smooth and spatially constant, and set
\begin{equation}
 F_t[q]=\int\Phi_t q\,d\theta-T_tH[q],\quad
 H[q]=-\int q\log q\,d\theta,\quad
 v_t=-m_t\nabla(\Phi_t+T_t\log q_t).
\end{equation}
Differentiating and using the continuity equation gives
\begin{align}
 \frac{d}{dt}F_t[q_t]
 &=\E_{q_t}[\partial_t\Phi_t]-\dot T_tH[q_t]
   +\int q_t\nabla(\Phi_t+T_t\log q_t)\cdot v_t\,d\theta\nonumber\\
 &=\E_{q_t}[\partial_t\Phi_t]-\dot T_tH[q_t]
   -\frac1{m_t}\int\|v_t\|^2\,dq_t.
 \label{eq:balance}
\end{align}
For fixed objective, temperature, and constant mobility $m$, integration
therefore gives $F[q_0]-F[q_\tau]=\A/m$. Only at unit mobility does this equal
$\A$. For external driving the first two terms in \eqref{eq:balance} remain.
For an arbitrary velocity field the last term need not have a negative sign,
because the gradient-flow relation is not assumed.

Consequently, a free-energy drop is not generally the minimum action between
endpoints. We use the unambiguous geometric excess
\begin{equation}
 \A_{\mathrm{ex}}[q,v]=\A[q,v]-\frac{W_2(q_0,q_\tau)^2}{\tau}\ge0,
 \label{eq:excess}
\end{equation}
relative to unrestricted transport. It is not defined as action minus
free-energy drop. Rescaling the same path from unit duration to duration
$\tau$ divides velocity and action by $\tau$; maintaining the same free energy
in the gradient-flow equation then rescales mobility as well.

\section{Limits of Scalar Capacity Interpretations}
\label{app:limits}
This appendix collects the negative results and scope qualifications that
motivate the conditional formulation used in the main text. They are grouped
here so that the paper's main argument can proceed from the claims it
establishes.

\subsection{The withdrawn compatible-capacity threshold}
An earlier formulation called an exponentiated average log determinant an
``effective rank'' and compared it with the stable rank of a new-task Hessian.
The resulting Compatible Capacity Threshold is not valid. Consider
\begin{equation}
 \Phi_A(x,y,z)=\tfrac12z^2,\qquad
 \Phi_B(x,y,z)=\tfrac12\bigl((x-a)^2+(y-b)^2\bigr).
\end{equation}
Task-$A$ gradient flow for duration $T$ has Jacobian
$J_T=\operatorname{diag}(1,1,e^{-T})$. On the preserving plane $z=0$,
$Q=(e_1,e_2)$ and $R_Q(J_T)=1$, while the restricted Hessian of $B$ is $I_2$
and has stable rank $2$. Thus the old threshold is triggered, even at $T=0$.
Nevertheless, starting at $(x_0,y_0,0)$, task-$B$ gradient flow is
\begin{equation}
 (x(t),y(t),z(t))=
 (a+e^{-t}(x_0-a),\ b+e^{-t}(y_0-b),\ 0).
\end{equation}
It has $\Phi_A=0$ for the whole path and
$\Phi_B(t)=e^{-2t}\Phi_B(0)$. Every positive target accuracy is attained in
finite time, with action $\Phi_B(0)(1-e^{-2\tau})$.

A dimension count alone cannot repair the threshold. With updates restricted
to $\dot x=u$, $\dot y=\dot z=0$ from the origin, the tasks
$((x-1)^2+y^2)/2$ and $(x^2+(y-1)^2)/2$ have identical restricted Hessians
$I_2$. The first can be solved along the available line, while the second has
loss at least $1/2$ on that line. Target direction and permitted updates are
therefore indispensable.

\subsection{Limits of volume monotonicity}
Unconditional volume monotonicity under composition also fails. In one
dimension, $J=1/2$ followed by $K=4$ changes $R$ from $1/4$ to $4$. These maps
are gradient flows of $x^2/2$ for time $\log2$ and $-x^2/2$ for time $\log4$,
respectively. The second map is expansive, which is precisely
the case excluded by Proposition~\ref{prop:volume}.

For a smooth finite-time flow, the fundamental matrix remains invertible under
the integrability assumptions stated in Section~\ref{sec:volume}. Hence a
small volume or small singular value should not be described as exact rank
loss without an additional singular limit or noninvertible update. Moreover,
the learning-map Jacobian concerns initial-state perturbations, whereas future
adaptation is governed by the admissible input response developed in
Section~\ref{sec:inputs}.

\subsection{No universal relation between excess action and contraction}
Positive excess action need not produce contraction. For any
$q_0\in\Ptwo$, a unit vector $e$, and $a(t)=\sin^2(\pi t/\tau)$, set
$\Psi_t(x)=x+a(t)e$ and $q_t=(\Psi_t)_\#q_0$. Then $J_t=I$ for all $t$,
$q_\tau=q_0$, and
\begin{equation}
 \A_{\mathrm{ex}}=\A=\int_0^\tau\dot a(t)^2dt
 =\frac{\pi^2}{2\tau}>0.
\end{equation}
This is an admissible translation loop with no deformation.

Conversely, contraction can occur at zero excess action. Take
$q_0=N(0,I_d)$, $0<c<1$, and
$\Psi_t(x)=(1+(c-1)t/\tau)x$. The action is $d(1-c)^2/\tau$.
For any endpoint coupling $(X,Y)$, Cauchy--Schwarz gives
$\E[X\cdot Y]\le cd$ and hence $\E\|X-Y\|^2\ge d(1-c)^2$.
The coupling $Y=cX$ attains this value, so
$W_2(q_0,q_\tau)^2=d(1-c)^2$ and $\A_{\mathrm{ex}}=0$, although the Jacobian
contracts throughout. Thus an inference from excess action alone to
contraction or forgetting requires additional dynamical assumptions.

\subsection{Interpretive and empirical scope}
A budget-dependent reachability boundary describes a finite-horizon
limitation after the budget, target, and permissible dynamics have been
defined. Irreversible critical-period closure or a thermodynamic phase
transition would require further evidence. In particular, vanishing gain and
small positive gain differ because the latter can permit adaptation at a
larger budget or under a changed optimizer. Old-task degradation and new-task
plasticity likewise remain separate observables.

Appendix~\ref{app:pilot} reports a
small-network pilot comparing retained and reset RMSProp state at identical
parameters. Its tested one-step response did not outperform the gradient-norm
baseline for predicting twenty-step progress across the five reported seeds.
This is neither a validation nor a refutation of the full finite-horizon
Gramian formulation: the pilot measures an instantaneous scalar proxy under
a fixed update law, not the minimum energy for a joint target. The results
are included to delimit the empirical claims, not as support for the propositions.

The earlier Compatible Capacity Threshold, unconditional volume monotonicity,
and universal dissipation-to-forgetting inference are therefore withdrawn.

\section{Reproducible Exploratory Pilot}
\label{app:pilot}
\subsection{Protocol and measured quantities}
We use a scalar-output network
$F_\theta(x)=b_0+\sum_{j=1}^8 v_j\tanh(w_jx+b_j)$ on the 21 equally spaced
inputs $x_i=-1+2i/20$, $i=0,\ldots,20$. Synthetic task targets are
\begin{equation}
 t_\alpha(x)=\cos(\alpha)x+\sin(\alpha)
 \left(x^2-\frac1{21}\sum_{i=0}^{20}x_i^2\right),\qquad
 L_\alpha(\theta)=\frac1{42}\sum_{i=0}^{20}
       (F_\theta(x_i)-t_\alpha(x_i))^2.
\end{equation}
Angles below are specified in degrees and converted to radians in the code.
For each seed, initial weights are sampled independently and uniformly from
$w_j\in[-0.8,0.8]$, $b_j\in[-0.2,0.2]$, and $v_j\in[-0.5,0.5]$, with
$b_0=0$ and zero RMSProp accumulator. Training uses full-batch gradients and
\begin{equation}
 g_k=\nabla L_\alpha(\theta_k),\quad
 h_{k+1}=\beta h_k+(1-\beta)g_k^{\odot2},\quad
 \theta_{k+1}=\theta_k-\eta g_k\oslash\sqrt{\epsilon+h_{k+1}},
\end{equation}
where operations on vectors are elementwise, $\eta=0.01$, $\beta=0.99$,
and $\epsilon=10^{-4}$ is inside the square root. Each training task lasts
80 updates, with 50 training angles sampled uniformly from $[0,180)$.
Checkpoints are taken after 0, 5, 20, and 50 tasks.

At each checkpoint, each evaluation angle in
$\{12,37,68,97,133,169\}$ is learned for 20 updates from a fresh copy of the
checkpoint, either retaining $h$ or resetting it to zero. Parameters match
exactly between each pair. Evaluation angles are fixed separately from the
random training angles. There are 48 conditions per seed and five seeds:
11, 29, 103, 2027, and 314159. Conditions within a seed share a training
history and are not independent replicates.

With initial evaluation gradient $g$, loss $L_0$, and
$\widetilde h=\beta h+(1-\beta)g^{\odot2}$, the compared predictors are
\begin{equation}
 p_{\rm response}=\frac{\eta}{L_0}
   \sum_j\frac{g_j^2}{\sqrt{\epsilon+\widetilde h_j}},\qquad
 p_{\rm gradient}=\frac{\eta}{L_0}\|g\|^2.
\end{equation}
The first is a first-order predicted fractional decrease for one specified
RMSProp update; it is not an estimated finite-horizon Gramian.
The baseline is also normalized by initial loss. Outcomes are the realized
fractional improvements $(L_0-L_k)/L_0$ for $k=1,20$.
An additional diagnostic is the output-Jacobian stable rank
$\|J_F\|_F^2/\|J_F\|_2^2$, with the largest eigenvalue of $J_FJ_F^\top$
approximated by power iteration. This Jacobian is not the learning-map Jacobian.

\subsection{Results and limitations}
Table~\ref{tab:pilot} gives Pearson correlations pooled over the 48 conditions
within each seed. The gradient baseline exceeds the response proxy at 20
steps in all five seeds. The one-step response correlation is also negative
for seed 2027. These observations do not support a claim of robust predictive
superiority for this proxy. They do not establish that directional response
information is useless or that a finite-horizon response estimate would fail.

\begin{table}[ht]
\centering\small
\begin{tabular}{r r r r r}
\toprule
Seed & Response, 1 step & Response, 20 steps & Gradient, 20 steps & Rank, 20 steps \\
\midrule
11     & 0.6481 & 0.2619 & 0.5650 & 0.0388 \\
29     & 0.9231 & 0.5112 & 0.7512 & 0.0983 \\
103    & 0.8770 & 0.5723 & 0.7549 & -0.0926 \\
2027   & -0.6731 & 0.1965 & 0.5248 & -0.3573 \\
314159 & 0.7481 & 0.5224 & 0.7754 & 0.2035 \\
\bottomrule
\end{tabular}
\caption{Exploratory within-seed pooled Pearson correlations. Each row pools
four checkpoints, six evaluation tasks, and two optimizer-state conditions.
The entries are descriptive statistics, not independent-sample significance tests.}
\label{tab:pilot}
\end{table}

This small tanh regression system need not exhibit the plasticity loss seen
in deeper continual learners. Pooling can confound checkpoint, initial loss,
and task difficulty. The six task angles are related synthetic functions,
and this exploratory comparison is not a preregistered benchmark. No
confidence intervals or population-level significance claims are made.
The exported data also include a change in loss on an anchor task at angle
zero; that anchor is not a joint retention constraint enforced during updates.
Consequently the pilot does not test the joint-target adaptation-cost theorem.
A test of that account would need a specified realizable input class, a
validated response approximation, and acquisition and retention endpoints
measured against the same target and budget.

\end{document}